\begin{document}

\title{Game Semantics and Linear Logic in the Cognition Process \thanks{The project was partly supported by RFBR grant 16-08-00832a}
}


\author{Dmitry Maximov
}


\institute{Dmitry Maximov \at
              Trapeznikov Institute of Control Science Russian Academy of Sciences, 65,
              Profsoyuznaya st., Moscow, Russia, 117997 \\
              ORCID: 0000-0001-8610-4342\\
              Tel.: +7-495-3348721\\
              \email{dmmax@inbox.ru}
}


\maketitle

\begin{abstract}
A description of the environment cognition process by intelligent systems with the fixed set of system goals is suggested. Such a system is represented by the set of its goals only without any models of the system elements or the environment. The set has a lattice structure and a monoid structure; thus, the structure of linear logic is defined on the set. The cognition process of some environment by the system is described on this basis. The environment is represented as a configuration space of possible system positions which are estimated by an information amount (by corresponding sets). This information is supplied to the system by the environment. Thus, it is possible to define the category of Conway games with a payoff on the configuration space and to choose an optimal system's play (i.e., a trajectory). The choice is determined by the requirement of maximal information increasing and takes into account the structure of the system goal set: the linear logic on the set is used to determine the priority of possible different parallel processes.
The survey may be useful to describe the behavior of robots and simple biological systems, e.g., ants.
\keywords{Artificial intelligence \and Cognition \and Game semantics \and Conway games
 \and Linear logic\and Goal lattice}
\end{abstract}

\section{Introduction}
\label{intro}
Universal artificial intelligence (UAI) is a unifying framework and general, formal foundational
theory for artificial intelligence investigations \cite{Hutt18}. Its primary goal is to give a mathematical answer to the question: what is the right thing to do in unknown environments? Investigations in the field are focused on systems which \emph{act} rationally. The artificial intelligence is represented in the approach as an information processor which consumes and gives out information and the theory also tries to answer in general the question ``how can a system composed of relatively unintelligent parts (say, neurons or transistors) behave intelligently?'' \cite{Cass}.

A formal description of the most intelligent agent behavior, in the sense of some intelligence measure, is suggested in UAI framework \cite{HuttB}, \cite{Hutt}. The framework specifies how an agent \emph{interacts} with an environment. The model is based on probabilistic modeling of the environment, on the next system move determination based on previous experience, on a numerical estimation of the system position reward and the maximization of the expected reward along the trajectory. However, the method to obtain this numerical estimation is absent.

It has been demonstrated in \cite{Maximov_17}, \cite{Maximov}, \cite{UBS}, that the structure existence (a lattice structure or else a monoid structure, i.e., the linear logic structure) in the system task \cite{Maximov_17}, \cite{Maximov}, or goal \cite{UBS} set is sufficient for the system to behave quite reasonable. The behavior looks even like ants' behavior in something \cite{Maximov_17}, thus, we may suppose that a system intelligence is the consequence of the system purpose or predestination in teleological spirit. The approach does not suppose the environment modeling unlike \cite{HuttB}.

In this paper, the topic is developed based on the idea that it is possible to represent the cognition process of an environment as fulfilling of parallel achievement processes of different goals in the environment. A tensor multiplication in linear logic corresponds to these parallel processes. The logic is modeled in some game category \cite{resource}. Thus, it is possible to describe goals achievement process by the intelligent system in some environment as a game. Position rewards in the game are represented by sets which define the information about goals or their distinctness degree. Thus, the environment provides the rewards which are lattice elements but not numbers as in \cite{HuttB}. The environment provides these lattice and linear logic structures. But similar structures may also exist on the system goal set, and our ideas about the system purposes give them. These are used to determine the priority of different parallel processes in such game category.

The paper is organized as follows. Section \ref{sec:1} presents the mathematical backgrounds necessary for the understanding of the results. Two propositions are also proved here which adapt the theory of \cite{resource} to the paper purposes. Section \ref{sec:3} represents a formal description of intelligent system behavior and the results discussion. In Sec.\ref{concl}, we conclude the paper.

\section{Mathematical Backgrounds}
\label{sec:1}
\newtheorem{Th}{Supposition}
\subsection{Lattices}
\label{subsec:1}
\begin{definition}
A \textbf{partially-ordered set} $P$ is the set with such a binary relation  $x \leqslant y$ for elements in it, that for all $x, y, z \in P$ the next relationships are performed:
\begin{itemize}
\item	$x \leqslant x$ (reflexivity);
\item	if $x \leqslant y$ and $y \leqslant x$, then $x = y$ (anti-symmetry);
\item	if $x \leqslant y$ and $y \leqslant z$, then $x \leqslant z$ (transitivity).
\end{itemize}
\end{definition}

The definition means that in the partially-ordered set not all elements are compared with each other. This property distinguishes these sets from linear-ordered ones, i.e., from numeric sets which are ordered by a norm.  Thus, the elements of the partially-ordered set are the objects of more general nature than numbers. In the partially-ordered set diagram, the greater the element (i.e., vertex, node) the higher it lies, and the elements are compared with each other lie in the same path from a bottom element to a top one. An example of a partially-ordered set diagram is represented in Fig.~\ref{fig:1} which is also a lattice diagram.
\begin{figure}
  \includegraphics[scale=0.7]{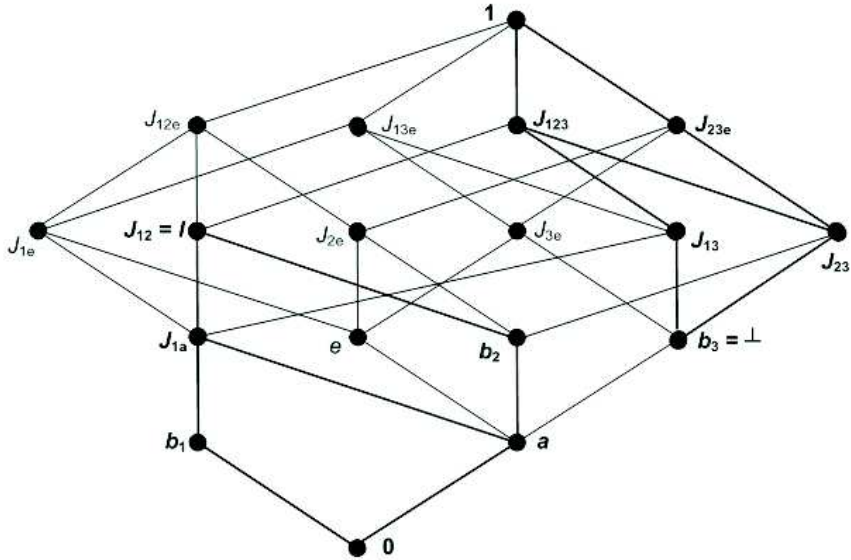}
\caption{A lattice example}
\label{fig:1}       
\end{figure}
\begin{definition}The \textbf{upper bound} of a subset $X$ in a partially-ordered set $P$ is the element $a\in P$, containing all $x\in X$.
\end{definition}

The supremum or \emph{join} is the smallest subset $X$ upper  bound. The infimum or \emph{meet} defines dually as the greatest element $a\in P$ containing in all $x\in X$.
\begin{definition} A \textbf{lattice} is a partially-ordered set, in which every two elements have their meet, denoting as $x\wedge y$, and join, denoting as $x\vee y$.
\end{definition}

In the lattice diagram the elements join is the nearest upper element to both of them, and the meet is the nearest lower one to both. The elements generating by joins and meets all other elements are called \textbf{generators}.

They refer to the lattice as \emph{complete} lattice if its arbitrary subset has the join and the meet. Thus, any complete lattice has the greatest element 1, and the smallest one 0 and every finite lattice is complete \cite{Birkhoff}.

\subsection{Linear Logic}
\label{subsec:2}
If a multiplication operation is additionally defined at the lattice elements, then the operations of linear logic also exist at the lattice. We use the phase semantic of linear logic from \cite{jerar1987}.
\begin{definition}
A \textbf{phase space} is a pare $(M,\bot)$, where $M$ is a multiplicative monoid (i.e., a triple $(M_{0}, \cdot, e)$ with $M_{0}$ is a set and $\cdot$ is a muliplication with the unit $e$), which is also a lattice, and the element \emph{false} of the lattice $\bot\subset M$ is an arbitrary subset of the monoid.
\end{definition}

In linear logic, the element \emph{false} differs from 0 (the minimal lattice element) in general in contrast to classical logic or intuitionistic one. The multiplication $X\cdot Y = \{x\cdot y|x\in X, y\in Y\}$ is defined for arbitrary monoid subsets (i.e. the lattice elements) $X, Y \subset M$. The linear implication $X\multimap Y = \{z|x\cdot z\in Y, \forall x\in X\}$ is also defined. For $X\subset M$ its dual is defined as $X^{\bot} = X\multimap \bot$. The dual element is a generalization of the negation in the case of linear logic.
\begin{definition}
\textbf{Facts} are such subsets $X\subset M$ that $X^{\bot\bot} = X$ or equivalently $X = Y^{\bot}$ for some $Y \subset M$.
\end{definition}

Thus, facts are lattice elements coinciding with their double negations. E.g. $\bot^{\bot} = I = \{e\}^{\bot\bot};\; 1 = M = \emptyset^{\bot};\; 0 = 1^{\bot} = M^{\bot} = \emptyset^{\bot\bot}$. Here 1 is the maximal element of the lattice $M$, 0 is its minimal element, $e$ is the monoidal unit, $I$ is the neutral element of the multiplicative conjunction (see after this).

It is easy to get the next properties: $X^{\bot}X\subset \bot;\; X\subset X^{\bot\bot};\; X^{\bot\bot\bot} = X;\; X\multimap Y^{\bot} = (X\cdot Y)^{\bot};\; (X\vee Y)^{\bot} = X^{\bot}\wedge Y^{\bot}$. From here we get only facts may be the values and the consequents of the implication.

At facts the lattice operations of the additive conjunction $\&$ and the additive disjunction + are defined in the next way: $X \& Y = X\wedge Y = (X^{\bot}\vee Y^{\bot})^{\bot};\; X + Y = (X^{\bot} \& Y^{\bot})^{\bot} = (X^{\bot}\wedge Y^{\bot})^{\bot} = (X \vee Y)^{\bot\bot}$. The duality of the operations is understood here as in the set theory: $\vee^{\bot} = \wedge;\; \wedge^{\bot} = \vee$ in which the duality means the negation.

At facts, multiplicative operations are also defined. Those are the multiplicative conjunction $\times$ and the multiplicative disjunction $\parr$: $X\times Y = (X\cdot Y)^{\bot\bot} = (X\multimap Y^{\bot})^{\bot} = (X^{\bot}\parr Y^{\bot})^{\bot};\; X\parr Y = (X^{\bot}\cdot Y^{\bot})^{\bot} =  X^{\bot}\multimap Y$. The neutral element of the operation $\&$ is 1, the dual to it (neutral element of the operation +) is 0. The neutral element of the operation $\parr$ is $\bot$, the dual to it, the neutral element of the operation $\times$, is $I$.

The set of facts is divided into two classes dual to each other: the class of \textbf{open} facts $\emph{Op}$ and the class of \textbf{closed} facts $\emph{Cl}$. The set $\emph{Op}$ is closed by operations + and $\times$. Its maximal element by inclusion is $I$, and the minimal one is 0. The set $\emph{Cl}$ is closed correspondingly by operations $\&$ and $\parr$, and its maximal element is 1, and the minimal one is $\bot$. The example of such a choice of $\bot$ and $I$ elements, and the open and close fact classes are depicted in Fig \ref{fig:1} by bold lines and symbols:

$J_{12} = I = b^{\bot}_{3} = {\bot}^{\bot}$;

$J_{1a} = J^{\bot}_{23};\;b_{1} = J^{\bot}_{23e}$;

$b_{2} = J^{\bot}_{13};\;a = J^{\bot}_{123};\;0 = {\top}^{\bot}$.

\noindent Duals for non-facts are chosen in \cite{Maximov_18} in the next way:

$J^{\bot}_{12e} = J^{\bot}_{13e} = J^{\bot}_{1e} = 0$

$J^{\bot}_{2e} = J^{\bot}_{3e} = e^{\bot} = b_{1}$

\noindent These definitions are used in Sec. \ref{subsec:3.1}.

The intuition of linear logic came from the fact that the premise of the implication may be used in an inference only once, i.e., the premise $a$ in $a\multimap b$ is treated as a resource which is consumed in the $b$ receiving process. For example, we may spend 1\$ to buy one pack of Camel or Marlborough, but not both of them that is represented by the formula $1\$\multimap 1Camel \& 1Marlborough$. In linear logic the formula $a\multimap b = b^{\bot}\multimap a^{\bot}$ is valid. Thus, the  term $a^{\bot}$ is treated as the resource absence, i.e., the above formula means that $a$ consuming and $b$ getting is the same as $b$ absence consuming (i.e., $b$ arising) and $a$ absence getting (i.e., $a$ annihilating). In this intuition, the tensor product $\times$ is considered as parallel processes fulfilling, i.e., both the resources $a$ and $b$ should be spent in parallel to receive $c$ in the formula $a\times b\multimap c$ unlike of the case $a\multimap b\& c$ when we should choose to what process to spend \cite{jerar1987}\cite{llp}.


\subsection{Game Semantics}
\label{subsec:3}
\begin{definition} \cite{resource}
A \textbf{Conway game} is defined as rooted graph with vertices \emph{V} as the game positions and edges $E\subset V\times V$  as the game moves. Each edge has a \textbf{polarity} $\pm 1$  which depends on whether it is the Proponent or the Opponent move.
\end{definition}
\begin{definition} \cite{resource}
A trajectory or a \textbf{play} is some path from the graph root $\ast$. The path is \textbf{alternated} if the adjacent edges are of different polarities.
\end{definition}
\begin{definition} \cite{resource}
A \textbf{strategy} $\sigma$ of a Conway game is defined as a non-empty set of alternated plays (paths) of even length, which are started from the Opponent move, closed up to the prefix of even length, i.e., for all plays \emph{s} and all moves \emph{m}, \emph{n},
$s \cdot m \cdot n \in \sigma$ implies $s \in \sigma$, and detemined.  Determinism means that two different paths with common prefix should coincide, i.e., for all plays \emph{s}, and for all moves \emph{m}, \emph{n}, and \emph{n'},
$s \cdot m \cdot n \in \sigma$ and $s \cdot m \cdot n' \in \sigma$ implies $n = n'$.
\end{definition}
\begin{definition} \cite{resource}
A dual play $X^{\bot}$ is obtained from the play \emph{X} by reversing the
polarity of moves.
\end{definition}
\begin{definition} \cite{resource}
The tensor product $X\otimes Y$ of two Conway games \emph{X} and \emph{Y} is the product of the two underlying graphs, i.e. positions $x\otimes y$ are $V_{X\otimes Y}=V_{X}\times V_{Y}$ with the root $\ast_{X\otimes Y}=\ast\times \ast_{Y}$, moves are $x\otimes y\rightarrow \left\{\begin{aligned}z\otimes y; x\rightarrow z \;in \;X\\
                      x\otimes z; y\rightarrow z \;in \;X\end{aligned}
\right.$ and the polarity of a move in $X\otimes Y$ is inherited from the polarity of the
underlying move in \emph{X} or \emph{Y}.
\end{definition}
Generalized linear logic is modeled in the category \textbf{Conw} of such games \cite{resource}. The category objects are Conway games and morphisms $X\rightarrow Y$  are strategies in  $X^{\bot}\parr Y$. The morphism composition and the identity morphism are apparent \cite{resource}.
\begin{definition} \cite{resource}, \cite{llp}
The linear implications $X\multimap Y$ in the category are defined as

$X\multimap Y = X^{\bot}\parr Y$

\noindent since the category is symmetric monoidal closed. The definition of the categorical construction of the operation $\parr$ which is dual to tensor $\otimes$ is not discussed here for simplicity because this is not essential for our description. 
\end{definition}

A Conway game \emph{X} with a payoff is the game with an additional weight $k_{X} = \{1, 1/2, 0\}$ in each vertex \cite{resource}.  The weight depends on whether the position is winning or not. In the tensor production and implication, these weights obey rules of Boolean conjunction and implication. Thus, the payoff Conway game $X\otimes Y$ is defined as the underlying Conway game $X\otimes Y$, equipped with the payoff function $k_{X\otimes Y}(x\otimes y) = k_{X}(x)\wedge k_{Y}(y)$ and the payoff Conway game $X\multimap Y$ is defined as the underlying Conway game $X\multimap Y$, equipped with the payoff function $k_{X\multimap Y}(x \multimap y) = k_{X}(x)\Rightarrow k_{Y}(y)$. A strategy $\sigma$ on a payoff Conway game \emph{X} is winning when every play $s : x \mapsto y$ in the strategy ends in a winning position $y$, i.e., in a position of payoff 1/2 or 1.

It is possible to prove that the categorical construction is conserved if the weights' numbers are replaced with some sets which form a Brouwer lattice, and the Boolean operations are replaced with lattice operations. The greater set is connected with a position, the more advantageous it is. We suppose the existence of a universal set containing all the others. Thus, all such estimation sets form a complete lattice.
\begin{definition}
We call a strategy $\sigma$ on a payoff Conway game \emph{X} with position estimations in a lattice as \emph{winning} if every play $s : x \mapsto y$ in the strategy ends in a position $y$ of payoff in the lattice which is different from 0.
\end{definition}
We show now that such defined winning strategies do compose as in \cite{resource}.
\begin{proposition}\label{prop1}
The strategy $\rho\circ\sigma : X \multimap Z$  is winning when the two strategies $\sigma : X \multimap Y$ and $\rho : Y \multimap Z$ are winning.
\end{proposition}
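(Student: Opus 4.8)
The plan is to reuse the composition-by-interaction argument of \cite{resource} essentially unchanged and to localise the single place where the lattice-valued payoff actually enters. First I would recall how composition works: the composite $\rho\circ\sigma$ is obtained by parallel composition on the shared game $Y$ followed by hiding, so every play $s:\ast\mapsto(x,z)$ of $\rho\circ\sigma$ in $X\multimap Z$ lifts to an interaction sequence on $X$, $Y$, $Z$ whose projection to $X\multimap Y$ is a play of $\sigma$ ending in some position $(x,y)$, and whose projection to $Y\multimap Z$ is a play of $\rho$ ending in the position $(y,z)$ with the \emph{same} intermediate $Y$-position $y$. This part I would inherit verbatim from \cite{resource}, because the underlying graphs, the polarities, the alternation condition and the determinism condition are untouched when the numeric weights are replaced by lattice elements; only the terminal payoff is reinterpreted. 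Hence it suffices to reason about the three terminal positions $x$, $y$, $z$, and I would relegate the bookkeeping that the lift has these endpoints (and that hiding creates no new terminal positions) to the verifications already carried out in \cite{resource}.

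Next I would translate the two winning hypotheses into payoff inequalities. Since $\sigma$ is winning, its terminal position in $X\multimap Y$ has payoff $k_{X\multimap Y}(x,y)=k_{X}(x)\Rightarrow k_{Y}(y)\neq 0$; since $\rho$ is winning, $k_{Y\multimap Z}(y,z)=k_{Y}(y)\Rightarrow k_{Z}(z)\neq 0$. Writing $a=k_{X}(x)$, $b=k_{Y}(y)$, $c=k_{Z}(z)$, the goal $k_{X\multimap Z}(x,z)=a\Rightarrow c\neq 0$ becomes a purely order-theoretic statement in the (complete) Brouwer estimation lattice.

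The decisive step, which does not come for free from \cite{resource}, is transitivity of the Brouwer implication together with preservation of non-triviality. In a complete Brouwer lattice modus ponens gives $a\wedge(a\Rightarrow b)\leqslant b$, hence $a\wedge(a\Rightarrow b)\wedge(b\Rightarrow c)\leqslant b\wedge(b\Rightarrow c)\leqslant c$, and by the adjunction defining $\Rightarrow$ this yields $(a\Rightarrow b)\wedge(b\Rightarrow c)\leqslant a\Rightarrow c$. Thus $a\Rightarrow c$ dominates the meet of the two hypothesised nonzero payoffs, and it remains to see that this meet is itself nonzero. I expect this to be the main obstacle: in a general lattice the meet of two nonzero elements may collapse to $0$, so here one must exploit the concrete structure of the estimation lattice rather than work in an arbitrary Brouwer lattice — for instance the fact that the relevant payoffs lie along a common chain from $0$ up to the universal set, where the meet of two nonzero elements is the smaller one and so is again nonzero. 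Once such a non-collapsing property is secured, $a\Rightarrow c\geqslant(a\Rightarrow b)\wedge(b\Rightarrow c)\neq 0$, the terminal payoff of $s$ is nonzero, and therefore $\rho\circ\sigma$ is winning.
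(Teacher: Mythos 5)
Your overall architecture matches the paper's: the composition of the underlying strategies is inherited wholesale from \cite{resource}, and the whole content of the proposition is reduced to the order-theoretic claim that $a\Rightarrow b\neq 0$ and $b\Rightarrow c\neq 0$ force $a\Rightarrow c\neq 0$ in the estimation lattice, with $a=k_X(x)$, $b=k_Y(y)$, $c=k_Z(z)$. Where you diverge is in the key lattice step: you invoke the transitivity law $(a\Rightarrow b)\wedge(b\Rightarrow c)\leqslant a\Rightarrow c$ of Heyting implication, which is correct and arguably cleaner than what the paper does; the paper instead argues by cases according to whether $k_X(x)$, $k_Y(y)$, $k_Z(z)$ have a nonzero common meet, and in the residual case $k_X(x)\wedge k_Z(z)=0$ reduces the problem to showing $\neg k_X(x)\neq 0$.

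However, your proposal stops exactly where the real work begins and leaves it unproved: you observe that $(a\Rightarrow b)\wedge(b\Rightarrow c)$ may collapse to $0$ and say that one must "exploit the concrete structure of the estimation lattice", without stating or verifying the property you need. This is a genuine gap, and it cannot be closed at the stated level of generality: in the five-element Heyting algebra of down-sets of the poset $\{x,y,z\}$ with $x\leqslant z$ and $y\leqslant z$, taking $a=\{x,y\}$, $b=\{x\}$, $c=\emptyset$ gives $a\Rightarrow b=\{x\}\neq 0$ and $b\Rightarrow c=\{y\}\neq 0$ but $a\Rightarrow c=\neg\{x,y\}=0$, so a complete Brouwer lattice alone does not suffice. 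Some additional hypothesis on the payoff lattice (for instance, a nonzero element $d$ below every nonzero element, or your suggestion that the relevant payoffs lie on a common chain) is indispensable. Note that the paper's own proof closes this very case by asserting that $\neg k_X(x)=0$ with $k_X(x)\neq 1$ can occur only when such a least nonzero $d$ exists --- in effect it builds in precisely the non-collapsing assumption you flag as missing. So your diagnosis of where the difficulty sits is accurate, but as submitted the proposal is not a complete proof: you must isolate the extra structural assumption on the estimation lattice, justify it for the payoffs actually arising in the cognition model, and then conclude via your inequality.
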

\begin{proof}
It is known that strategies do compose \cite{resource}. Thus, it is sufficient to check the winning condition. We should to observe that the composition of two winning positions is winning:

$\begin{aligned}k_{X}(x)\Rightarrow k_{Y}(y)>0, \;i.e.\; x\multimap y \;is\; winning;\\
                     k_{Y}(y)\Rightarrow k_{Z}(z)>0, \;i.e.\; y\multimap z \;is \;winning;\\
                     implies\; k_{X}(x)\Rightarrow k_{Z}(z)>0, \;i.e.\; x\multimap z \;is\; winning.\end{aligned}$

\noindent Indeed, by definition, in $a\Rightarrow b = c$, \emph{c} is such a maximal lattice element which has the same meet with \emph{a} as \emph{b}. If $k_{X}(x)$, $k_{Y}(y)$, $k_{Z}(z)$ have a common meet different from 0, then evidently $k_{X}(x)\Rightarrow k_{Z}(z)>0$. If $k_{X}(x)\wedge k_{Y}(y) = a$, $k_{Y}(y)\wedge k_{Z}(z) = b$ and $a\wedge b =0$, then

$\left\{\begin{aligned}k_{X}(x)\Rightarrow k_{Z}(z) >0, \; k_{X}(x)\wedge k_{Z}(z) >0 \\
k_{X}(x)\Rightarrow k_{Z}(z) = \neg k_{X}(x) = k_{X}(x)\Rightarrow 0, \;k_{X}(x)\wedge k_{Z}(z) = 0.\end{aligned}\right.$

\noindent $\neg k_{X}(x) = 0$ when there exist such a lattice element $d\neq 0$ that $d = \wedge x_{i}$ for all lattice elements $x_{i}\neq 0$ and $k_{X}(x) \neq 1$. But this assumption contradicts the condition $k_{X}(x)\wedge k_{Z}(z) = 0$. If $k_{X}(x) = 1$ then $k_{X}(x)\Rightarrow k_{Z}(z) = k_{Z}(z)$. Thus, $k_{X}(x)\Rightarrow k_{Z}(z)>0$ in all cases.
\end{proof}
\begin{proposition}\label{prop2}
The category \textbf{SetPayoff} whose objects are payoff Conway games with position weights take values in a lattice, and whose morphisms $X \rightarrow Y$ are winning strategies in $X \multimap Y$ is symmetric monoidal closed.
\end{proposition}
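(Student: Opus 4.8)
The plan is to inherit the whole symmetric monoidal closed structure from the category \textbf{Conw} of \cite{resource} and then check, piece by piece, that it restricts to \textbf{SetPayoff}. The underlying games, the tensor $\otimes$, the internal hom $\multimap$, the unit, and the structural isomorphisms (associator $\alpha$, symmetry $\gamma$, unitors $\lambda,\rho$, evaluation and currying) are already available in \textbf{Conw}, and the coherence axioms (pentagon, triangle, hexagon) are equalities of strategies that hold there. Since being \emph{winning} is a property of a strategy rather than extra structure, the only thing to verify is that the payoff data are preserved by the operations and that every structural morphism is winning in the lattice sense of the preceding definition; the coherence equalities then transport verbatim. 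Closure of objects is immediate from the definitions: $k_{X\otimes Y}=k_X\wedge k_Y$ and $k_{X\multimap Y}=k_X\Rightarrow k_Y$ again land in the Brouwer lattice, and the unit is the one-position game equipped with payoff equal to the top element $1$, so that $k_{I\otimes X}=1\wedge k_X=k_X$.

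First I would record the Brouwer-lattice identities that do the work. The identity strategy on $X$ reaches positions $x\multimap x$ of payoff $k_X(x)\Rightarrow k_X(x)=1>0$, so identities are winning. For $\alpha,\gamma,\lambda,\rho$ the payoff at a terminal (copycat) position is of the form $p\Rightarrow p$ after using the commutativity and associativity of $\wedge$ in $k_{X\otimes Y}=k_X\wedge k_Y$ together with $k_{I\otimes X}=k_X$; each such payoff equals $1>0$, so all the structural isomorphisms are winning. Closure under composition is exactly Proposition \ref{prop1}. For the internal-hom adjunction I would use the exponential law $(a\wedge b)\Rightarrow c=a\Rightarrow(b\Rightarrow c)$, valid in any Brouwer lattice: under the currying bijection of \textbf{Conw} the terminal payoff $(k_X(x)\wedge k_Y(y))\Rightarrow k_Z(z)$ on $(X\otimes Y)\multimap Z$ equals the terminal payoff $k_X(x)\Rightarrow(k_Y(y)\Rightarrow k_Z(z))$ on $X\multimap(Y\multimap Z)$, so a strategy is winning on one side iff its curried form is winning on the other. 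Hence currying restricts to a bijection of winning strategies, the evaluation map (obtained from the identity by uncurrying) is winning, and the adjunction $-\otimes Y\dashv Y\multimap-$ descends to \textbf{SetPayoff}.

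The step I expect to be the main obstacle is closure of the winning strategies under the tensor bifunctor, i.e. showing $\sigma\otimes\tau:X\otimes Z\multimap Y\otimes W$ is winning when $\sigma:X\multimap Y$ and $\tau:Z\multimap W$ are. A terminal position then has payoff $(a\wedge c)\Rightarrow(b\wedge d)$ with $a=k_X(x)$, $b=k_Y(y)$, $c=k_Z(z)$, $d=k_W(w)$, and one only knows $a\Rightarrow b>0$ and $c\Rightarrow d>0$ separately. The Heyting inequality $(a\Rightarrow b)\wedge(c\Rightarrow d)\le(a\wedge c)\Rightarrow(b\wedge d)$ reduces the goal to $(a\Rightarrow b)\wedge(c\Rightarrow d)>0$, and this is precisely where the lattice hypothesis already invoked in Proposition \ref{prop1} must reappear: the existence of a nonzero element $d_0=\bigwedge\{x_i\neq 0\}$ lying below every nonzero element forces $d_0\le(a\Rightarrow b)\wedge(c\Rightarrow d)$, so the meet of the two nonzero factors is again nonzero and the product position is winning. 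I would therefore state this property of the payoff lattice as a standing assumption and derive from it, mirroring the case analysis of Proposition \ref{prop1}, that the tensor of two winning positions is winning.

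Finally I would assemble the pieces. Bifunctoriality of $\otimes$ on morphisms and the naturality of $\alpha,\gamma,\lambda,\rho$ and of the currying bijection are equalities of strategies inherited from \textbf{Conw}, so they need no re-proof once the relevant strategies are known to be winning. With objects closed under $\otimes$ and $\multimap$, the unit a payoff game, every structural morphism winning, and the class of winning strategies closed under composition, tensor and currying, the symmetric monoidal closed structure of \textbf{Conw} restricts to \textbf{SetPayoff}, which is therefore itself symmetric monoidal closed.
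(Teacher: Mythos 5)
Your proposal is correct in outline and is substantially more thorough than the paper's own argument. The paper's proof of Proposition~\ref{prop2} consists of a single check: since the category of Conway games is already symmetric monoidal closed, it suffices (the paper claims) to verify the identity $(k_{X}(x)\wedge k_{Y}(y))\Rightarrow k_{Z}(z) = k_{X}(x)\Rightarrow (k_{Y}(y)\Rightarrow k_{Z}(z))$, which holds in any Heyting algebra. That identity is exactly the one you invoke to show that currying restricts to a bijection of winning strategies, so your argument contains the paper's argument as one of its pieces. Where you go further is in recognizing that restricting a symmetric monoidal closed structure to a subclass of morphisms requires checking closure under \emph{all} the structural data: identities, structural isomorphisms, composition, currying, and the tensor bifunctor on morphisms. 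The last of these is the genuine obstacle you correctly isolate: from $a\Rightarrow b>0$ and $c\Rightarrow d>0$ one cannot conclude $(a\wedge c)\Rightarrow(b\wedge d)>0$ in an arbitrary Brouwer lattice, and your observation that the same hypothesis implicitly used in Proposition~\ref{prop1} (a nonzero element lying below every nonzero element) must reappear here is exactly right. The paper silently omits this check, so your version both reproduces the paper's key computation and fills a gap the paper leaves open; the price is that the hidden hypothesis on the payoff lattice must be promoted to a standing assumption, which the paper itself never states explicitly.
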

\begin{proof}
The category of Conway games is symmetric monoidal close \cite{resource}. Therefore, it is sufficient to check that $(k_{X}(x)\wedge k_{Y}(y))\Rightarrow k_{Z}(z) = k_{X}(x)\Rightarrow (k_{Y}(y)\Rightarrow k_{Z}(z))$ for all positions. But this formula is valid in Heyting algebras (i.e. in Brouwer lattices).
\end{proof}

Thus, the symmetric monoidal closed categorical construction for payoff Conway games from \cite{resource} is conserved for lattice payoffs.

\section{System Behavior Description}
\label{sec:3}
We consider a cognition process in which a system investigates an environment.
\begin{Th}It is supposed that the system investigates the environment visible up to some horizon in each direction  (as if in the fog: the objects visible better are nearer to the system, Fig.~\ref{fig:2}) and builds images of the observed objects.
\end{Th}
\begin{figure}
  \includegraphics[scale=0.6]{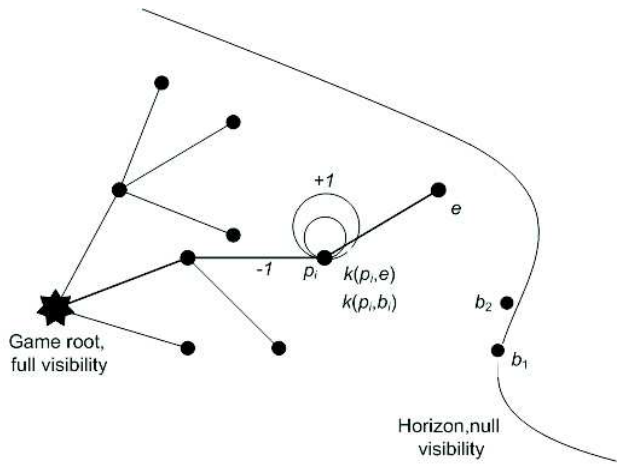}
\caption{An example of a game and its visibility horizon. The bold line shows the resulting play.
The environment moves are depicted by circles.}
\label{fig:2}       
\end{figure}
If the environment is a space of ideas, we also understand some its objects better than some others. Hence, those objects which we know better are located nearer to us in this space.

\begin{Th}It is supposed that in the case of ``\emph{tabula rasa}'' when the system has no images in the memory, an object attractive degree guides the system to behave: it investigates objects in the environment which have attracted its attention, and builds their images.
\end{Th}
The theory of attention is not discussed here (a review may be found in  \cite{Cog_Sci}).
The system should distribute the objects by the attraction degree during the investigation. Thus, some objects are more attractive, some of them are less attractive, and some cannot be compared by the attraction degree. Therefore, we get a partial order at the object set. It is supposed, that the set has the bottom element, i.e., the element of null interest, and the top element, i.e., the object of the greatest interest which is the most attractive element. The latter one is the lattice join of all its elements, i.e., the most attractive ``object'' is the join or combination of the lattice generators which are the individual objects or subobjects. All the others are their joins (combinations) or meets in the case when an attractive object is some part of the other object. Therefore, the system builds a complete lattice of the environment object images, their combinations, and allotments. Such a lattice may just exist; thus the attractions of new objects may be combined with the attractions of preexisting ones. Many simple biological systems, e.g., ants, possibly have such a preconstructed lattice of objects they can have a deal with. Perhaps the partial order in the lattice may be dynamically changed for more complicated systems.

The system may have a similar lattice of its goals or tasks which it can fulfill~\cite{Maximov}. As above, the goals or tasks may be combined or intersected to subitems. The higher goal lies in the lattice diagram (thus, the more tasks it includes), the more important this behavior variant is.
\begin{Th}It supposed that the preferable behavior of the system is to achieve all its goals.
\end{Th}
This variant corresponds to the top lattice element 1. In the case of environment images as the lattice elements the top element is the most attractive object. And the bottom element 0 corresponds to complete inactivity and to the least significant behavior variant. All the estimations may be considered as partially true truth values. Thus, we can say that the more essential the behavior is, the truer it is.

The lattice may also have an additional structure at it, e.g., the structure of linear logic~\cite{Maximov_17}, ~\cite{Maximov_MLSD13}. It is the latter case that is investigated in the present paper:
\begin{Th}the system has a preconstructed lattice of environment objects or the lattice of the system goals in the environment, and the lattice is provided with a monoid, i.e., with a linear logic structure.
\end{Th}
An example of such a goal/object lattice $M_{s}$ which may match the system is depicted in Fig. \ref{fig:1}. In the lattice, vertices $x_{i}$ and \emph{e} are the generators and denote the individual system goals or the environment objects. Vertices $J_{i,...}$ are the generators' joins and denote the combined goals' achievement. Vertices may have meets, i.e., some object part may be included in different objects (or correspondent goals). A variant of open (\emph{Op}) and closed (\emph{Cl}) classes definitions are indicated in the system lattice (Fig. \ref{fig:1}). This definition (with the multiplication definition) defines the structure of linear logic in the lattice. The element multiplication is obtained (usually ambiguously, \cite{Maximov_17}) from the demand to implement the linear logic operations properties. In this construction, it is possible to consider parallel processes of combined goals achieving as the tensor product of corresponding lattice elements in the logic (Sec. \ref{subsec:2}, the end). And the priority of different processes is obtained from the demand of the highest correspondent tensor product estimation in the lattice (the product is an element of the lattice, thus the higher it lies, the more important it is, i.e., the greater the truth-value of the correspondent process is).

The process of the system move in the environment space, i.e., the system's cognition process can be represented as a game in which the environment informs the system about (partially) visible objects at each step with the position reward which estimates the information. The reward is considered as a set which corresponds to the coincidence degree of the object and its image. Such coincidence degree sets create a lattice since they may have partial intersections (i.e., meets), and they may be combined (i.e., they have joins). Obviously, they have the full coincidence degree as the top and null coincidence degree as the bottom. Thus, we consider different lattices to estimate the cognition process --- the goal lattice and coincidence degree lattices for all goals.

In the case of ``\emph{tabula rasa}'' the total information about an object increases instead of the coincidence degree with the preexisting image. This information increasing may be represented as the topology refinement on the set which depicts the object image. The topology is also a lattice of the set subsets.

The image theory is not considered here, and its review may also be found in  \cite{Cog_Sci}. We regard the environment as the Proponent and the system as the Opponent because we need the negative game to use the categorical construction of Sec.~\ref{subsec:3}. The Opponent moves from one position in the environment to the other by the use of the information to achieve his goals, i.e., environments objects. The more fortunate the position, i.e., the larger the reward, the more precise information the environment provides about the object in the position. Thus, the completely winning position is the last game step of the Proponent in which the investigating object coincides with its image. And the system is placed initially in the configuration space (environment) in the root * of the system game \emph{A} with the system goal/object lattice $M_{s}$.

The game \emph{A} represents \emph{\textbf{possible}} system moves in the environment. But the \emph{\textbf{real}} trajectory or the play is chosen from the demand of the maximal total position reward along the projected path. The system move in the environment is estimated corresponding to an optimality criterion with the reward $k(p_{i},b_{j})$ in the position $p_{i}$ of the goal $b_{j}$ achieving process.

It may be that the system does not see any goal initially and moves according to a criterion of an optimal move in objects' absence. Thus, the system has, besides, a goal (task) $a$ of the movement in the environment, which, therefore, should be included in the lattice $M_{s}$. Hence, such a free movement may have its own value, though the system may simply wander and may not want to cognise anything.

The \emph{\textbf{optimality criterion}} may represent the highest degree of the correspondence to the demanding system configuration, or the greatest freedom in future moves or the most excellent visibility from a position or so on in the case of the free movement in the goals absence. Also, in the case of the system goals achieving we suppose the better a goal object is visible, the higher the reward is.

Let us \emph{n} goals $b_{1} ... b_{n}$ are discovered in the environment by the system with information about them  $k(p_{i},b_{j})$ in positions $p_{i}$ of the game \emph{A} Fig. \ref{fig:2}. The game \emph{A} corresponds to the process of achieving the free movement goal \emph{a}. Then, a winning strategy of the game $A'=A\multimap B_{1} \otimes ... \otimes B_{k}$ defines a transition (morphism) from the game \emph{A} to this new game $A'$ of moving and achieving goals $b_{1} ... b_{k}$  in the games $B_{1}  ...  B_{k}$. It supposed that the system can achieve several goals in parallel up to some moment when only one object should be chosen. Thus, the game $A'$ corresponds to parallel processes of achieving those \emph{k} goals from discovered \emph{n} ones, which may be better achieved in the next sense.

It is reasonable to choose the trajectory (play) from the demand to maximize the reward along the path within the visibility horizon\footnote{Strictly, this formula should look as follows \begin{equation}k^{A\multimap B_{1} \otimes ... \otimes B_{k}}_{play} = \max_{plays}\bigcup_{play}[k^{A}\Rightarrow (k^{B_{1}}\& ... \& k^{B_{k}})]
\end{equation} But this form is inconvenient in practice. Thus, we use (\ref{eqmain}) for which the correspondent Propositions \ref{prop1}, \ref{prop2} may be also proved.}:
\begin{equation}\label{eqmain}
k^{A\multimap B_{1} \otimes ... \otimes B_{k}}_{play} \equiv  k^{A^{\bot}\parr B_{1} \otimes ... \otimes B_{k}}_{play} = \max_{plays}[\bigcup_{play}k^{A^{\bot}}\bigcup_{play}(k^{B_{1}}\& ... \& k^{B_{k}})]
\end{equation}
Here the reward $k^{A\multimap B_{1} \otimes ... \otimes B_{k}}_{play} = k^{A^{\bot}\parr B_{1} \otimes ... \otimes B_{k}}_{play}$ is maximized in the game $A'$ and corresponds to that process of \emph{k} goals achieving that has the greatest priority
\begin{equation}\label{eq2}
a\multimap b_{1} \times ... \times b_{k} = (a\times (b_{1} \times ... \times b_{k})^{\bot})^{\bot}
\end{equation}
in the system goal lattice. Thus, there are \emph{k} the most important parallel processes from the viewpoint of the system goal lattice. The priority is maximal among all possible parallel processes of achieving \emph{n} discovered goals.

The maximum in (\ref{eqmain}) is taken among all possible plays, and it joins the rewards along these plays (i.e., trajectories) in games $A^{\bot}$, $B_{1}$, ... and $B_{k}$. Thus, it is demanded to maximize the coincidence degrees of all objects $b_{i}$ and their system images along the resultant path up to the moment when all the images together may not be able to improve. Then, the total number of chosen objects should be decreased by the same method: the system should pick those $l$ objects with $l < k$ which parallel achieving processes have the greatest estimation in the system goal lattice. And so on up to the moment, when only one goal remains.

In (\ref{eq2}) the multiplication $b_{1} \times ... \times b_{k}$ and, therefore, all the $b_{i}$ must be facts. But in the game category \textbf{SetPayoff} the objects are not distinguished by classes. Thus, every element in the goal lattice may correspond to a game, and we have to extend the tensor product definition in the lattice phase model to arbitrary elements following \textbf{SetPayoff} construction. This means that the tensor $\times$ is the monoidal multiplication $\cdot$ in reality.



If there are parallel processes which are not compared by priority, i.e., if there are several incompatible priorities $(a\times (b_{1} \times ... \times b_{k})^{\bot})^{\bot}$, it is possible to reorder the goal lattice in the manner that some lattice vertex would be used as an additional priority \cite{Maximov}. In the reordered lattice, initially incompatible elements can become compatible so we can choose the preference of the processes. Such a lexicographical rule follows from the teleological system assignment, i.e., our vision of the system purpose. It may be that the lattice is so symmetric that there is no any reason to distinguish the generator priorities in a non-trivial manner in general. In this case, we may use the fact that there is an ambiguity in the linear logic structure definition \cite{Maximov_17}, \cite{Maximov_18}. The restriction of the ambiguity may explain the behavior of a concrete system.

\subsection{An example of the ambiguity restriction}\label{subsec:3.1}
Specifically, let us consider the lattice in Fig. \ref{fig:1} which is taken from \cite{Maximov_18}. In our interpretation, the vertex $a$ corresponds to the process of a free system move and vertexes $b_{i}$ and $e$ are the goals (i.e., the environment objects or task processes). The vertex $a$ is included in vertexes $J_{1a}, e, b_{2}, b_{3}$ because of the system should move in the environment to achieve the complete cognition of the goals (or tasks fulfilling). Strictly, all the vertexes $e, b_{2}, b_{3}$ should be represented as $J_{1a}$ as the join of a goal and its achieving process, but it is not done for simplicity. In \cite{Maximov_18}, the demand of linear logic properties validness defines the lattice elements' monoidal multiplications. But these properties are not enough to define the multiplications uniquely. Thus, we have an ambiguity which is signed by $\vee$ in the next multiplication definitions obtained in \cite{Maximov_18}: e.g., $b_{1}a = 0 \vee a$ means that the multiplication $b_{1}a$ may be as 0 as $a$.
\begin{multline}\label{syst}
\left\{\begin{aligned}
b_{1}b_{1} = b_{1}\\
b_{1}b_{2} = 0\\
b_{1}b_{3} = 0\\
b_{1}a = 0 \vee a\\
b_{2}a = a \vee 0\\
b_{1}a + b_{2}a = a\\
b_{1}e = 0\\
b_{2}b_{2} = b_{2}\\
b_{2}b_{3} = b_{3}\\
b_{2}e = e\\
b_{3}b_{3} = e \vee b_{2} \vee b_{3} \vee J_{23} \vee J_{2e} \vee J_{3e} \vee J_{23e}\\
b_{3}e = e \vee J_{2e} \vee J_{3e} \vee J_{23e}\\
b_{3}J_{3e} = e \vee J_{2e} \vee J_{3e} \vee J_{23e}\\
b_{3}a = b_{3} \vee a \vee 0\\
ae = J_{23e} \vee J_{2e} \vee J_{3e} \vee e\\
ee = e \vee J_{2e} \vee J_{3e} \vee J_{23e}
\end{aligned}\right.
\end{multline}
Let us the system has discovered all the goals $b_{i}, e$ but the object $e$ is seen better than the others for definiteness Fig. \ref{fig:2}. Therefore, it is more attractive than the others and it may be considered as an additional priority, so we should consider only those games which include the goal $e$ achieving. Thus, we are interested in the next possible game $A'$ estimation variants of achieving the discovered goals (the definitions of Sec. \ref{subsec:2} are used):
\begin{equation}\label{1}
a\multimap (J_{1a}\times e \times b_{2})= (a\times (J_{1a}\times e \times b_{2})^{\bot})^{\bot}
\end{equation}
\begin{equation}\label{2}
a\multimap (J_{1a}\times e \times b_{3})= (a\times (J_{1a}\times e \times b_{3})^{\bot})^{\bot}
\end{equation}
\begin{equation}\label{3}
a\multimap (J_{1a}\times e \times b_{2}\times b_{3})= (a\times (J_{1a}\times e \times b_{2}\times b_{3})^{\bot})^{\bot}
\end{equation}
\begin{equation}\label{5}
a\multimap ( e \times b_{2}\times b_{3})= (a\times (e \times b_{2}\times b_{3})^{\bot})^{\bot}
\end{equation}
\begin{equation}\label{6}
a\multimap (e \times b_{2})= (a\times (e \times b_{2})^{\bot})^{\bot}
\end{equation}
\begin{equation}\label{7}
a\multimap (e \times b_{3})= (a\times (e \times b_{3})^{\bot})^{\bot}
\end{equation}
\begin{equation}\label{8}
a\multimap (J_{1a}\times e)= (a\times (J_{1a}\times e)^{\bot})^{\bot}
\end{equation}
\begin{equation}\label{9}
a\multimap (e)= (a\times (e)^{\bot})^{\bot}
\end{equation}

It is seen from (\ref{syst}) that it should be $b_{1}a = 0$ and $b_{2}a = a$ or $b_{1}a = a$ and $b_{2}a = 0$. In the latter case, all the estimations (\ref{1})--(\ref{9}) are equal to $J_{123}$ and, therefore, are not distinguishable. Thus, the system should be pre-constructed in the way as $b_{1}a = 0$ and $b_{2}a = a$ to be able to distinguish the behavior variants.

In this case, we obtain the following estimations:
\begin{equation}\label{1}
a\multimap (J_{1a}\times e \times b_{2})= 1
\end{equation}
\begin{equation}\label{2}
a\multimap (J_{1a}\times e \times b_{3})= \left\{\begin{aligned}1, \;if\; ab_{3}=a\vee 0\\
                                                                b_{1},\; if\; ab_{3}=b_{3}
                                                  \end{aligned}\right.
\end{equation}
\begin{equation}\label{3}
a\multimap (J_{1a}\times e \times b_{2}\times b_{3})= \left\{\begin{aligned}1, if ab_{3}=a\vee 0\\
                                                                b_{1}, if ab_{3}=b_{3}
                                                  \end{aligned}\right.
\end{equation}
\begin{equation}\label{5}
a\multimap ( e \times b_{2}\times b_{3})= 1
\end{equation}
\begin{equation}\label{6}
a\multimap (e \times b_{2})= 1
\end{equation}
\begin{equation}\label{7}
a\multimap (e \times b_{3})= 1
\end{equation}
\begin{equation}\label{8}
a\multimap (J_{1a}\times e)= 1
\end{equation}
\begin{equation}\label{9}
a\multimap (e)= 1
\end{equation}
Also, it is seen that the system should be pre-constructed as $ab_{3}=b_{3}$ to be able to distinguish the behavior variants. Then, we see that variants (\ref{1}) and (\ref{5}) are preferable because they have the maximal estimations and include the greatest possible goals' number. We suppose that the behavior variant which includes more goals achieving is preferable for the system with other equal estimations. To distinguish the two remain variants, the system may have some additional reasons to consider one or the other goal as more attractive.

\subsection{Comments}
The additional fine point is that the graphs of the games \emph{A} and \emph{B} are not always coincide, i.e., a goal may be seen in \emph{B}, but the way to the goal may not exist in \emph{A} (i.e., in the environment). Thus, we have a deal with the game \emph{A} of the system moves (and move rewards), on the positions of which the rewards of \emph{k} games \emph{B} are possible but are not obligatory.

It should be also pointed out that the structures of tensorial
multiplication in $(a\times (b_{1} \times ... \times b_{k})^{\bot})^{\bot}$ and in $A^{\bot}\parr B_{1} \otimes ... \otimes B_{k}$ are different. In
$(a\times (b_{1} \times ... \times b_{k})^{\bot})^{\bot}$ the tensor $\times$ is a monoid multiplication in the goal lattice. The structure is pre-existent and does not depend on the environment. It may be chosen for very general considerations \cite{Maximov_17}, \cite{Maximov_18}, and is determined by the system purpose. But the tensor $\otimes$ (and, correspondent, the co-tensor $\parr$) in $A^{\bot}\parr B_{1} \otimes ... \otimes B_{k}$ are defined in the monoidal closed category and are determined by the environment, by objects and obstacles distributions, by visibility and so on. Thus, there are two different linear logic structures in the approach. Obviously, these two structures should match each other for the system to function successfully. The variability of ways to chose the monoid provides perhaps the survival of the most successful systems in nature. But the way to chose the suitable monoid structure in advance is not clear.

\section{Conclusion}
\label{concl}
In the paper, a process of an environment cognition by an intelligent system was considered as a movement in the environment. The system move was represented as a game in which the environment corresponds to the Proponent which provides the Opponent (the system) with some information about the environment objects which may also be considered as the system goals. Different goals achieving is considered as parallel processes which are represented as the tensor product of correspondent games and form a comprehensive game.

The game has rewards on its positions, which estimates the quality of the information provided by the environment. The reward may be very different: it may represent the highest freedom in future moves, or the most excellent visibility from a position, or the degree of the coincidence of the goal object image with the original or some else. We demand the greatest total reward along the system play to choose the path. When the total reward of all parallel processes cannot be improved more, we decrease the number of selected goal achieving processes up to the one in the end.

We choose those goal achieving processes from all possible, which have the highest estimation in the system goal lattice. It is so because every goal and the corresponding process of its achieving has a definite correspondent truth value in the lattice. The higher the value lies in the lattice diagram, the higher priority of the process is. When it is not possible to determine the most important estimation, some additional methods may be used to select the optimal variant.

Thus, we consider two types of estimations: the goal lattice value defines the choice of the goal achieving processes from all possible ones, and the position rewards of the game determine the optimal path of these chosen processes in the environment. Different linear logic structures give both of the estimations --- on the goal lattice and the game category, i.e., in the environment. Thus, we may conclude that the linear logic structures play an essential role in the behavior determination of intelligent systems.

Such a model corresponds to the approach in which the system intelligence is considered as the consequence of the system predestination, i.e., of the structures on the system goal set or on the set of tasks which the system can fulfill. In simple systems, i.e., ants, such structures may be pre-existent. In more complicated ones, these structures may be being built and changed during the system life.


\bibliographystyle{spbasic}      


\end{document}